% for arxiv v1 Frank.Nielsen@acm.org
\documentclass{article}

\usepackage{fullpage,amsmath,amsthm,amssymb,graphicx,pifont,url,xcolor,subcaption}

\def\Var{\mathrm{Var}}
\def\dt{\mathrm{d}t}
\def\erf{\mathrm{erf}}
\def\dmu{\mathrm{d}\mu}
\def\du{\mathrm{d}u}
\def\TV{\mathrm{TV}}
\def\KL{\mathrm{KL}}
\def\JS{\mathrm{JS}}
\def\calI{\mathcal{I}}
\def\calX{\mathcal{X}}
\def\calF{\mathcal{F}}
\renewcommand{\Re}{\mathbb{R}}
\def\eqdef{{:=}}

\newtheorem{theorem}{Theorem}

\begin{document}

\title{Guaranteed Deterministic Bounds on the Total Variation Distance between Univariate Mixtures}

\author{Frank Nielsen\\Sony Computer Science Laboratories, Inc.\\Japan\\{\small\tt{}Frank.Nielsen@acm.org}%
\and%
Ke Sun\\Data61\\Australia\\{\small\tt{}Ke.Sun@data61.csiro.au}}
\date{}
\maketitle

\begin{abstract}
The total variation distance is a core statistical distance between probability measures that satisfies the metric axioms, with value always  falling in  $[0,1]$.
This distance plays a fundamental role in machine learning and signal processing:
It is a member of the broader class of $f$-divergences, and it is related to the probability of error in Bayesian hypothesis testing. 
Since the total variation distance does not admit closed-form expressions for statistical mixtures (like Gaussian mixture models),
one often has to rely in practice on costly numerical integrations or on fast Monte Carlo approximations that however do not guarantee deterministic lower and upper bounds.
In this work, we consider two methods for bounding the total variation of univariate mixture models:
The first method is based on the information monotonicity property of the total variation to design guaranteed nested deterministic lower bounds.
The second method relies on computing the geometric lower and upper envelopes of weighted mixture components to derive deterministic bounds based on density ratio. 
We demonstrate the tightness of our bounds in a series of experiments on Gaussian, Gamma and Rayleigh mixture models.
\end{abstract}

%%%%%%%%
\section{Introduction}\label{sec:intro}
%%%%%%%%%

%%
\subsection{Total variation and $f$-di\-ver\-gences}
%%%%%

Let $(\calX\subset \Re,\calF)$ be a measurable space on the sample space $\calX$ equipped with the Borel $\sigma$-algebra~\cite{Ross-2014},
and $P$ and $Q$ be two probability measures with respective densities $p$ and $q$ with respect to the Lebesgue measure $\mu$. 
The Total Variation distance~\cite{Csiszar-2004} (TV for short) is a statistical {\em metric} distance defined by
\begin{equation*}
\TV(P,Q) \,\eqdef\, \sup_{E\in\calF}|P(E)-Q(E)| = \TV(p,q),
\end{equation*}
with
\begin{equation*}
\TV(p,q) = \frac{1}{2}\int_{\calX} |p(x)-q(x)|\dmu(x)
          = \frac{1}{2}\,\|p(x)-q(x)\|_1.
\end{equation*}
The TV distance ranges in $[0,1]$, and is related to the probability of error $P_e$ in Bayesian statistical hypothesis testing~\cite{GenPe-2014},
so that $P_e(p,q)=\frac{1}{2}(1-\TV(p,q))$. Since we have for any $a, b\in\Re^+$
\begin{equation*}
\frac{1}{2}\vert{}a-b\vert{}=\frac{a+b}{2}-\min(a,b)=\max(a,b)-\frac{a+b}{2},
\end{equation*}
we can rewrite the TV equivalently as
\begin{align}
\TV(p,q)
&=\int_{\calX} \left(\frac{p(x)+q(x)}{2}-\min(p(x),q(x))\right)\dmu(x),\nonumber\\
&= 1-\int_{\calX} \min(p(x),q(x))\dmu(x)= \int_{\calX} \max(p(x),q(x))\dmu(x) - 1.\label{eq:tvminmax}
\end{align}

Thus by bounding the ``histogram similarity''~\cite{ColorIndexing-1991,GenPe-2014} 
\begin{equation*}
h(p,q)\;\eqdef\,\int_{\calX} \min(p(x),q(x))\dmu(x),
\end{equation*}
or equivalently 
\begin{equation*}
H(p,q)\;\eqdef\,\int_{\calX} \max(p(x),q(x))\dmu(x),
\end{equation*}
since $\int_{\calX} (\max(p(x),q(x))+\min(p(x),q(x)) ) \dmu(x)=2$ by eq.~(\ref{eq:tvminmax}),
we obtain corresponding bounds for the TV and Bayes' error probability $P_e$.

%%%%%%%%%%
\subsection{Prior work}
%%%%%%%%%%

For simple univariate distributions like univariate Gaussian distributions,
the TV may admit a closed-form expression.
For example, consider exponential family distributions~\cite{EF-2009}
with density $p(x;\theta_p)=\exp(\theta_p^\top t(x) - F(\theta_p))$
and $q(x)=p(x;\theta_q)=\exp(\theta_q^\top t(x) - F(\theta_q))$.
When we can compute {\em exactly} the root solutions of
$(\theta_p-\theta_q)^\top t(x)=F(\theta_p)-F(\theta_q)$, e.g., $t(x)$ encode a polynomial of degree at most $5$,
then we can split the distribution support as $\Re=\uplus_{s=1}^{l} I_s$ based on the roots.
Then in each interval $I_s$, we can compute the elementary interval integral using the cumulative distribution
functions $\Phi_p$ and $\Phi_q$.
Indeed, assume without loss of generality that $p(x)-q(x)\geq 0$ on an interval $I=(a,b)$. 
Then we have
\begin{equation*}
\frac{1}{2}\int_I \vert p(x)-q(x) \vert \dmu(x)= \frac{1}{2}\left(\Phi_p(b)-\Phi_p(a)-\Phi_q(b)+\Phi_q(a)\right).
\end{equation*}
For univariate Gaussian distributions $p_1(x)=p(x;\mu_1,\sigma_1)$ and $p_2(x)=p(x;\mu_2,\sigma_2)$ with $\sigma_1\not=\sigma_2$ (and $t(x)=(x,x^2)$),
the quadratic equation $(\theta_p-\theta_q)^\top t(x)=F(\theta_p)-F(\theta_q)$ expands as $ax^2+bx+c=0$,
where 
\begin{align*}
a &= \frac{1}{\sigma_1^2}-\frac{1}{\sigma_2^2},\\
b &= 2\left(\frac{\mu_2}{\sigma_2^2}-\frac{\mu_1}{\sigma_1^2}\right),\\
c &= \left(\frac{\mu_1}{\sigma_1}\right)^2 - \left(\frac{\mu_2}{\sigma_2}\right)^2 + 2\log \frac{\sigma_1}{\sigma_2}.
\end{align*}
We have two distinct roots $x_1=\frac{-b-\sqrt{\Delta}}{2a}$ and $x_2=\frac{-b+\sqrt{\Delta}}{2a}$
with $\Delta=b^2-4ac\geq 0$.
Therefore the TV between univariate Gaussians writes as follows:
\begin{equation*} 
\TV(p_1, p_2) = \frac{1}{2}
\left|
\erf\left(\frac{x_1-\mu_1}{\sigma_1\sqrt{2}}\right)-\erf\left(\frac{x_1-\mu_2}{\sigma_2\sqrt{2}}\right)
\right| +
\frac{1}{2} \left|
\erf\left(\frac{x_2-\mu_1}{\sigma_1\sqrt{2}}\right)-\erf\left(\frac{x_2-\mu_2}{\sigma_2\sqrt{2}}\right)
\right|,
\end{equation*}
where  $\erf(x)=\frac{1}{\sqrt{\pi}} \int_{-x}^x e^{-t^2} \dt$ denotes the error function.
Notice that it is difficult problem to bound or find the modes of a GMM~\cite{ModeGMM-2017},
and therefore to decompose the TV between GMMs into elementary intervals.

In practice, for mixture models (like Gaussian mixture models, GMMs),
the TV is approximated by either \ding{192} discretizing the integral (i.e., numerical integration)
\begin{equation*}
\widetilde{\TV}_m(p,q)=\frac{1}{2} \sum_{i=1}^m |p(x_i)-q(x_i)| (x_{i+1}-x_i) \geq 0,
\end{equation*}
($\widetilde{\TV}_m(p,q)\simeq \TV(p,q)$) for $x_1<\ldots<x_{m+1}$ (one can choose any quadrature rule)
or \ding{193} performing stochastic Monte Carlo (MC) integration via importance sampling: 
\begin{equation*}
\widehat{\TV}_m(p,q)=\frac{1}{2m} \sum_{i=1}^m \frac{1}{r(x_i)}|p(x_i)-q(x_i)| \geq 0,
\end{equation*}
where $x_1,\ldots, x_m$ are independently and identically distributed (iid) samples from a proposal distribution $r(x)$.
Choosing $r(x)=p(x)$ yields
\begin{equation*}
\widehat{\TV}_m(p,q)=\frac{1}{2m} \sum_{i=1}^m  \left|1-\frac{q(x_i)}{p(x_i)}\right| \geq 0,
\end{equation*}

While \ding{192} is time consuming, \ding{193} cannot guarantee deterministic bounds although it is asymptotically a consistent estimator (confidence intervals can be calculated):
$\lim_{m\rightarrow\infty} \widehat{\TV}_m(p,q)=\TV(p,q)$ (provided that the variance $\Var_p[\frac{q(x)}{p(x)}]<\infty$).
This raises the problem of consistent calculations, since for $p,q,r$, we may have a first run
with $\widehat{\TV}_m(p,q)>\widehat{\TV}_m(p,r)$, and second run with $\widehat{\TV}_m(p,q)<\widehat{\TV}_m(p,r)$.
Thus we seek for guaranteed deterministic lower (L) and upper (U) bounds so that $L(p,q)\leq \TV(p,q)\leq U(p,q)$.

The TV is the only metric {\em $f$-divergence}~\cite{fdiv-2007} so that
\begin{align*}
\TV(p,q) &= I_{f_\TV}(p:q),
\end{align*}
where
\begin{align*}
I_f(p:q) &\eqdef  \int_{\calX} p(x)f\left(\frac{q(x)}{p(x)}\right) \dmu(x),\\
f_\TV(u) &= \frac{1}{2}|u-1|.
\end{align*}
A $f$-divergence can either be bounded (e.g., TV or the Jensen-Shannon divergence) or unbounded when the integral diverges (e.g., the Kullback-Leibler divergence or the $\alpha$-divergences~\cite{IG-2016}).

Consider two finite mixtures $m(x)=\sum_{i=1}^k w_i p_i(x)$ and $m'(x)=\sum_{j=1}^{k'} w_{j}' p_j'(x)$.
Since $I_f(\cdot:\cdot)$ is jointly convex, we have 
$I_f(m:m')\leq \sum_{i,j}^{k,k'} w_i w'_{j} I_f(p_i(x):p_{j}'(x))$.
We may also refine this upper bound by using a variational bound~\cite{Hershey-2007,KLGMM-2012}.
However,  these upper bounds are too loose for TV as they can easily go above the trivial upper bound of $1$. 

In information theory, Pinsker's inequality relates the Kullback-Leibler divergence to the TV by
\begin{equation}\label{eq:pinsker}
\KL(p:q)\geq (2\log e)\;\TV^2(p:q).
\end{equation}

Thus we can upper bound TV in term of KL as follows: $\TV(p:q)\leq \sqrt{\frac{1}{2\log e}\KL(p:q)}$.
Similarly, we can upper bound TV using any $f$-divergences~\cite{GenPinsker-2009}.
However, the bounds may be implicit because the paper~\cite{GenPinsker-2009} considered 
the best lower bounds of a given $f$-divergence in term of total variation.
For example, it is shown (\cite{GenPinsker-2009}, p. 15) that the Jensen-Shannon divergence is lower bounded by 
\begin{align*}
\JS(p:q)
&\ge\left(\frac{1}{2}-\frac{\TV(p:q)}{4}\right)\log(2-\TV(p:q))\\
&+\left(\frac{1}{2}+\frac{\TV(p:q)}{4}\right)\log(2+\TV(p:q))-\log 2.
\end{align*}
See also~\cite{reversePinsker-2015} for reverse Pinsker inequalities (introducing crucial ``fatness conditions'' on the distributions since otherwise the $f$-divergences may be unbounded).
We may then apply combinatorial lower and upper bounds on $f$-divergences of mixture models, following the method of~\cite{NS-2016}, to get bounds on TV.
However, 
it is challenging to have our bounds for the TV $f$-divergence beat the naive upper bound of $1$  and the lower bound of $0$.

%%%%%%%%%%%%%%%
\subsection{Contributions and paper outline}
%%%%%%%%%%%%%%%

We summarize our main contributions as follows:
\begin{itemize}
	\item We describe the Coarse-Grained Quantized Lower Bound (CGQLB, Theorem~1 in \S\ref{sec:TVbounds}) by proving the information monotonicity of the total variation distance.
	\item We present the Combinatorial Envelope Lower and Upper bounds (CELB/CEUB, Theorem~2 in \S\ref{sec:geoenv}) 
	for the TV between univariate mixtures that rely on geometric envelopes and density ratio bounds.
\end{itemize} 

The paper is organized as follows:
We present our deterministic bounds in \S\ref{sec:TVbounds} and in \S\ref{sec:geoenv}.
We demonstrate numerical simulations in \S\ref{sec:exp}.
Finally, \S\ref{sec:concl} concludes and hints at further perspectives for designing bounds on $f$-divergences.

%%%%% 
\section{TV bounds from information monotonicity}\label{sec:TVbounds}
%%%%%%
 
Let us prove the {\em information monotonicity} property~\cite{IG-2016} of the total variation distance:
coarse-graining  the (mixture) distributions necessarily decreases their total variation.\footnote{This is not true for the Euclidean distance.}

Let $\calI=\uplus_{s=1}^{l} I_s$ be an arbitrary finite partition of the support $\calX$ into $l$ intervals.
Using the cumulative distribution functions (CDFs) of mixture components, we can calculate the mass of mixtures inside each elementary interval
as a weighted sum of the component CDFs.
Let $m_\calI$ and $m'_\calI$ denote the induced coarse-grained discrete distributions (also called lumping~\cite{Csiszar-2004}).
Their total variation distance is
\begin{equation*}
\TV(m_\calI,m'_\calI)
\eqdef \frac{1}{2} \sum_{s=1}^l \left|m_\calI^s-{m'}_\calI^s\right|.
\end{equation*}

\begin{theorem}[Information monotonicity of TV]
The information monotonicity of the total variation ensures that
\begin{equation}
0\leq \TV(m_\calI,m'_\calI)\leq \TV(m,m') \leq 1.
\end{equation}
\end{theorem}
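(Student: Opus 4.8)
**

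The plan is to establish the two nontrivial inequalities: first that $\TV(m_\calI,m'_\calI)\le\TV(m,m')$ (the information-monotonicity step), and second that $\TV(m,m')\le 1$ (which, together with nonnegativity of $\TV(m_\calI,m'_\calI)$, completes the chain). The right-hand bound $\TV(m,m')\le 1$ follows immediately from the identity $\TV(m,m') = 1 - \int_\calX \min(m(x),m'(x))\dmu(x)$ in eq.~(\ref{eq:tvminmax}) together with $\int_\calX\min(m(x),m'(x))\dmu(x)\ge 0$; the lower bound $0\le\TV(m_\calI,m'_\calI)$ is trivial since $\TV$ is defined as a sum of absolute values. So the core of the argument is the middle inequality.

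For the middle inequality, the natural approach is to prove it interval by interval. Fix one elementary interval $I_s$ of the partition $\calI$. The lumped masses are $m_\calI^s = \int_{I_s} m(x)\dmu(x)$ and ${m'}_\calI^s = \int_{I_s} m'(x)\dmu(x)$, so their contribution to $\TV(m_\calI,m'_\calI)$ is $\frac{1}{2}\left|\int_{I_s}(m(x)-m'(x))\dmu(x)\right|$. The contribution of the same interval to $\TV(m,m')$ is $\frac{1}{2}\int_{I_s}|m(x)-m'(x)|\dmu(x)$. The key step is the pointwise-to-integral triangle inequality
\begin{equation*}
\left|\int_{I_s}(m(x)-m'(x))\dmu(x)\right| \le \int_{I_s}|m(x)-m'(x)|\dmu(x),
\end{equation*}
which is just the statement that ``the absolute value of an integral is at most the integral of the absolute value.'' Summing over $s=1,\dots,l$ and multiplying by $\frac{1}{2}$ gives $\TV(m_\calI,m'_\calI)\le\TV(m,m')$. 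Since $\calI$ is an arbitrary finite partition into intervals, this also yields the nestedness claimed elsewhere in the paper: refining the partition can only increase (or leave unchanged) the lumped $\TV$, because any refinement of $\calI$ is itself a valid partition and the coarser lumping is a lumping of the finer one.

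I do not expect a genuine obstacle here; the result is essentially a restatement of the contraction of $\TV$ under the deterministic ``channel'' that maps $x$ to the index $s$ of the interval containing it, and the one-line inequality above is the whole content. The only thing worth being careful about is the role of the mixture structure: one should note that the lumped masses $m_\calI^s$ and ${m'}_\calI^s$ are genuinely computable in closed form as weighted sums $\sum_i w_i(\Phi_{p_i}(b_s)-\Phi_{p_i}(a_s))$ of component CDFs over $I_s=(a_s,b_s)$, which is what makes $\TV(m_\calI,m'_\calI)$ a usable deterministic lower bound in practice — but this computability is orthogonal to the inequality itself, which holds for arbitrary integrable densities, not just mixtures. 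A secondary point to mention is that the footnote's remark (failure for Euclidean distance) is explained by exactly this step: the $L^2$ analogue $\|\int_{I_s}(m-m')\|^2 \le \int_{I_s}|m-m'|^2$ is false in general by Jensen's inequality going the wrong way, so the proof genuinely uses the $L^1$/convexity structure of $f_\TV(u)=\frac12|u-1|$.
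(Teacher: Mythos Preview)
Your proof is correct. The core step---the integral triangle inequality
\[
\left|\int_{I_s}(m(x)-m'(x))\dmu(x)\right| \le \int_{I_s}|m(x)-m'(x)|\dmu(x)
\]
applied interval by interval and summed---is exactly what is needed, and your handling of the endpoints $0$ and $1$ is fine.

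The paper takes a slightly different but equivalent route: it works with the $\max$ representation $\TV(m,m')=\sum_s\int_{I_s}\max(m,m')\dmu-1$ and uses the inequality $\int_{I_s}\max(m,m')\dmu\ge\max\bigl(\int_{I_s}m\,\dmu,\int_{I_s}m'\dmu\bigr)$ on each interval. Via the identity $\max(a,b)=\tfrac{a+b}{2}+\tfrac{1}{2}|a-b|$, this is literally the same inequality as your triangle inequality, just rewritten. Your version is arguably more direct, since it names the standard $L^1$ triangle inequality outright rather than rediscovering it through the $\max$ form; the paper's version has the minor advantage of connecting to the $H(p,q)=\int\max(p,q)\dmu$ quantity it introduced earlier. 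Neither approach gains anything the other lacks. Your side remarks on computability of the lumped masses and on why the argument fails for $L^2$ are correct and add useful context, though the $L^2$ comment could be sharpened: the issue is that $(\int_{I_s}f)^2\le |I_s|\int_{I_s}f^2$ carries an unwanted factor $|I_s|$, so the per-interval bound breaks when $|I_s|>1$.
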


\begin{proof}
\begin{equation*}
\TV(m,m') = \int \max(m(x),m'(x)) \dmu(x) -1 = \sum_{s=1}^l \int_{I_s} \max(m(x),m'(x)) \dmu(x) -1.
\end{equation*}
Since $\int_{I_s} \max(m(x),m'(x)) \dmu(x) \ge \int_{I_s} m(x) \dmu(x) $
and $\int_{I_s} \max(m(x),m'(x)) \dmu(x) \ge \int_{I_s} m'(x) \dmu(x) $, we have
\begin{equation*}
\int_{I_s} \max(m(x),m'(x)) \dmu(x) \ge
\max\left(\int_{I_s} m(x) \dmu(x), \int_{I_s} m'(x) \dmu(x)\right).
\end{equation*}
Therefore
\begin{align*}
\TV(m,m') 
&\ge \sum_{s=1}^l \max\left(\int_{I_s}m(x)\dmu(x),\int_{I_s}m'(x)\dmu(x)\right)-1\\
&=\sum_{s=1}^l \max\left(m_\calI^s, m_\calI'^s\right)-1\\
&= \TV(m_\calI,m'_\calI).
\end{align*}
%since $\max(a_1,b_1)+\max(a_2,b_2) \geq \max(a_1+a_2,b_1+b_2)$ for any numbers $a_1, a_2, b_1, b_2$.
%Indeed, wlog, assume $a_2\geq b_2$ then $\max(a_1,b_1)+\max(a_2,b_2)=
%\max(a_1+a_2,b_1+a_2)\geq \max(a_1+a_2,b_1+b_2)$.
\end{proof}

Note that we coarse-grain a continuum support $\Re$ (or $\Re^+$, say for Rayleigh mixtures) into a finite number of bins.
The proof does not use the fact that the support is 1D and is therefore generalizable to the multi-dimension case.
For the discrete case, the proof~\cite{IG-2016} will be different.
In summary, this approach yields the Coarse-Grained Quantization Lower bound (CGQLB).

By creating a hierarchy of $h$ nested partitions $\calI_h\subset\ldots\subset\calI_1\subset \calI_0=\calX$, we get the telescopic inequality:
\begin{equation*}
\TV(m_{\calI_h},m'_{\calI_h})\leq \ldots\leq \TV(m_{\calI_1},m'_{\calI_1})\leq \TV(m,m').
\end{equation*}

This coarse-graining technique yields lower bounds for any $f$-divergence due to their information monotonicity property~\cite{Csiszar-2004}.

%%%%
%\subsection{Upper bound: Particular case of shared components}
%%%%%

We present now a simple upper bound when dealing with a very specific case of mixtures.
Consider mixtures sharing the {\em same} prescribed components (i.e., only weights may differ).
For example, this scenario occurs when we {\em jointly} learn a set of mixtures from several datasets~\cite{comix-2016}.
Then it comes that
\begin{align}\label{eq:comix}
\TV(m,m') & =\frac{1}{2} \int \left| \sum_{i=1}^k (w_i-w_i') p_i(x)  \right| \dmu(x)\nonumber\\
&\le \frac{1}{2} \int \sum_{i=1}^k \left\vert w_i-w_i'\right\vert p_i(x) \dmu(x)\nonumber\\
&=\frac{1}{2} \sum_{i=1}^k \left\vert w_i-w_i'\right\vert \int p_i(x) \dmu(x)\nonumber\\
&=\frac{1}{2} \sum_{i=1}^k |w_i-w_i'| \leq 1.
\end{align}
We may always consider mixtures $m$ and $m'$ sharing the same $k+k'$ prescribed components (by allowing some weights to be zero).
In that case, let $w$ and $w'$ denote the common weight distribution. From the above derivations we get $\TV(m,m')\leq \TV(w,w')$.
However, when mixtures do not share components, we end up the trivial upper bound of $1$ since in that case $\sum_{i=1}^k |w_i-w_i'|=2$.
The upper bound in eq.~(\ref{eq:comix}) can be easily extended to mixture of positive measures (with weight vectors not necessarily normalized to one).

%%%%%%%%%%%%
\section{TV bounds via geometric envelopes}\label{sec:geoenv}
%%%%%%%%%%%%

Consider two statistical mixtures $m(x)=\sum_{i=1}^k w_i p_i(x)$ and $m'(x)=\sum_{i=1}^{k'} w_i' p_i'(x)$.
Let us bound $h(m,m')$ following the computational geometric technique introduced in~\cite{NS-2016,NS-2017} as follows
\begin{align}
p_{l(x)}   &\leq m(x)  \leq p_{u(x)},\nonumber\\
p'_{l'(x)} &\leq m'(x)  \leq p'_{u'(x)},\nonumber
\end{align}
where $l(x)$ and $u(x)\in [k]=\{1,\ldots, k\}$ and $l'(x)$ and $u'(x)\in [k']$ denote respectively the {\em indices} of the component of mixture $m(x)$ (resp. $m'(x)$) 
that is the lowest (resp. highest) at position $x\in\calX$.
The sequences of $l(x),u(x),l'(x),u'(x)$ are piecewisely integer constant when $x$ swipes through $\calX$,
and can be computed from lower and upper geometric envelopes of the mixture component probability distributions~\cite{NS-2016,NS-2017}.
It follows that 
\begin{equation}\label{eq:lub}
\min(p_{l(x)},p'_{l'(x)}) \leq \min(m(x),m'(x)) \leq \max(p_{u(x)},p'_{u'(x)}).
\end{equation}
We partition the support $\calX$ into $l=O_{k,k'}(1)$ elementary intervals
$I_1=(a_1,b_1), [a_2, b_2), \ldots, I_s=[a_l,b_l)$ (with $a_{s+1}=b_s$).
Observe that on each interval, the indices $l(x), u(x), l'(x)$ and $u'(x)$ are all constant. 
We have
\begin{equation*}
h(m,m')=\sum_{s=1}^\ell \int_{I_s} \min(m(x),m'(x)) \dmu(x),
\end{equation*}
and we can use the lower/upper bounds of eq.~(\ref{eq:lub}) to bound $h(m,m')$.
For a given interval $I_s=[a_s,b_s)$, we calculate
\begin{align*}
L_s(m,m') &= \int_{I_s} \min(p_{l(x)}(x),p'_{l'(x)}(x)) \dmu(x), \\
U_s(m,m') &= \int_{I_s} \max(p_{u(x)}(x),p'_{u'(x)}(x)) \dmu(x),
\end{align*}
in {\em constant time} using the cumulative distribution functions (CDFs) $\Phi_i$'s and $\Phi_i'$'s of the mixture components.
Indeed, the probability mass inside an interval $I=[a,b)$ of a component $p(x)$ (with CDF $\Phi(x)$) is simply expressed as the
difference between two CDF terms $\int_a^b p(x)\dmu(x) = \Phi(b)-\Phi(a)$.
Let $A(m,m') \eqdef \sum_{s=1}^l L_s(m,m')$ and $B(m,m')\eqdef \sum_{s=1}^l U_s(m,m')$.  It follows that
$$
A(m,m')    \leq h(m,m')  \leq  B(m,m').
$$
Notice that the above derivation applies to $H(m,m')$ as well, and
therefore
$$
A(m,m')    \leq H(m,m')  \leq  B(m,m').
$$
Thus we obtain the following lower and upper bounds of the TV:
\begin{align}
L(m,m')&\eqdef \max\{1-B(m,m'), A(m,m')-1\},\nonumber\\
%&\quad\leq \TV(m,m') \leq \nonumber\\
U(m,m')&\eqdef \min\{1-A(m,m'),B(m,m')-1\}.
\end{align}

To further improve the bounds for exponential family components,
we choose for each elementary interval $I_s$ a \emph{reference measure}
$r_s(x)=\exp\left(\theta_s^\top t(x)-F(\theta_s)\right)$, which can simply
be set to the upper envelope $p_u(x)$ over $I_s$.
Then we can bound the {\em{}density ratio}
\begin{equation*}
\frac{p_i(x)}{r_s(x)}
= \exp\left((\theta_i-\theta_s)^\top t(x)\right)
\in \left[A_s^i,B_s^i\right]
\end{equation*}
for any $p_i(x)$ in the same exponential family.
Notice that $t(x)$ is usually a vector of monomials representing a polynomial function whose bounds
can be computed straightforwardly for any given interval $[a,b)$. Therefore
\begin{equation*}
\frac{m(x)-m'(x)}{r_s(x)} =
\sum_{i=1}^k w_i \frac{p_i(x)}{r_s(x)}-\sum_{i=1}^{k'}w_i' \frac{p_i'(x)}{r_s(x)}
\end{equation*}
must lie in the range $[L_s, U_s]$, where
\begin{equation*}
L_s = \sum_{i=1}^k w_i A_s^i - \sum_{i=1}^{k'}w_i'(B_s^i)',\quad{}
U_s = \sum_{i=1}^k w_i B_s^i - \sum_{i=1}^{k'}w_i'(A_s^i)'.
\end{equation*}
Correspondingly, $\vert m(x)-m'(x)\vert/r_s(x)\in[\mu_s,\Omega_s]$.
If $L_sU_s<0$, then $\mu_s=0$, otherwise $\mu_s=\min(\vert{}L_s\vert{},\vert{}U_s\vert{})$.
$\Omega_s=\max(\vert{}L_s\vert{},\vert{}U_s\vert{})$.
Hence, we get
\begin{equation*}
\mu_s \int_{I_s} r_s(x)  \dmu(x)
\le \int_{I_s} \vert m(x)-m'(x) \vert \dmu(x) \le
\Omega_s \int_{I_s} r_s(x) \dmu(x),
\end{equation*}
and 
\begin{equation*}
\frac{1}{2} \sum_{s=1}^l \mu_s \int_{I_s} r_s(x)  \dmu(x)
\le \TV(m,m') \le
\frac{1}{2} \sum_{s=1}^l \Omega_s \int_{I_s} r_s(x) \dmu(x).
\end{equation*}
We call these bounds CELB/CEUB for combinatorial envelope lower/upper bounds.

\begin{theorem}
The total variation distance between univariate Gaussian mixtures can be deterministically approximated in
$O(n\log n)$-time, where $n=k+k'$ denotes the total number of mixture components. 
\end{theorem}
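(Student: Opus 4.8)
The plan is to turn the CELB/CEUB construction of \S\ref{sec:geoenv} into an explicit algorithm and bound its running time; the whole computation decomposes into (i) building a constant number of lower and upper envelopes of weighted Gaussian densities, (ii) overlaying their breakpoints to partition $\calX$, and (iii) a constant-time-per-interval evaluation of Gaussian CDFs.

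First I would establish the geometric input: the graphs of two weighted component densities $w_i\,p(x;\mu_i,\sigma_i)$ and $w_j\,p(x;\mu_j,\sigma_j)$ cross at most twice. Taking logarithms, their difference equals $\log\frac{w_i}{w_j}-\frac{(x-\mu_i)^2}{2\sigma_i^2}+\frac{(x-\mu_j)^2}{2\sigma_j^2}-\log\frac{\sigma_i}{\sigma_j}$, a polynomial in $x$ of degree at most $2$ (degree $1$ when $\sigma_i=\sigma_j$), hence with at most two real roots. Therefore the pointwise maximum $p_{u(x)}$ and minimum $p_{l(x)}$ of the $k$ weighted components of $m$ are, respectively, the upper and lower envelopes of a family of $k$ univariate functions that pairwise intersect at most twice; by the theory of Davenport--Schinzel sequences of order $2$, each such envelope consists of at most $2k-1=O(k)$ pieces. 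It can be computed in $O(k\log k)$ time by divide-and-conquer on the component set: recursively compute the envelopes of two halves of $k/2$ components, then merge the two $O(k)$-piece envelopes in $O(k)$ time by sweeping their (already sorted) breakpoints together with the $O(k)$ crossings of the two currently active Gaussian pieces, each crossing being found in $O(1)$ by solving a quadratic. The same holds for $m'$ with $k$ replaced by $k'$. I expect this to be the main obstacle, since it is where the combinatorial-geometry facts --- the $\lambda_2(n)=O(n)$ envelope-complexity bound and its $O(n\log n)$ algorithmic realization --- must be invoked carefully, together with the degenerate cases ($\sigma_i=\sigma_j$, coincident components, ties in the envelope).

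Next I would merge the breakpoint lists of the (at most four) envelopes $p_{l(\cdot)}$, $p_{u(\cdot)}$, $p'_{l'(\cdot)}$, $p'_{u'(\cdot)}$ into a common refinement of $\calX$ into $\ell=O(k+k')=O(n)$ elementary intervals $I_s$; since the four sorted lists have total length $O(n)$, this costs $O(n)$. On each $I_s$ the indices $l(x),u(x),l'(x),u'(x)$ are constant by construction, so $L_s(m,m')$ and $U_s(m,m')$ --- and, for the density-ratio refinement, the interval bounds $[A_s^i,B_s^i]$ of the monomials in $t(x)=(x,x^2)$ and the resulting $\mu_s,\Omega_s$ and $\int_{I_s}r_s(x)\,\dmu(x)$ --- each reduce to evaluating a constant number of Gaussian CDFs $\Phi_i(x)=\frac12\bigl(1+\erf\bigl(\frac{x-\mu_i}{\sigma_i\sqrt{2}}\bigr)\bigr)$ at the two endpoints of $I_s$. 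Treating $\erf$ as a unit-cost primitive (equivalently, approximable to any fixed precision in $O(1)$ time), this is $O(1)$ per interval, hence $O(n)$ over all $\ell=O(n)$ intervals.

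Finally I would accumulate $A(m,m')=\sum_{s=1}^{\ell}L_s(m,m')$ and $B(m,m')=\sum_{s=1}^{\ell}U_s(m,m')$ (resp.\ the weighted sums $\frac12\sum_s\mu_s\int_{I_s}r_s(x)\,\dmu(x)$ and $\frac12\sum_s\Omega_s\int_{I_s}r_s(x)\,\dmu(x)$) and output the deterministic bounds $L(m,m')=\max\{1-B(m,m'),A(m,m')-1\}$ and $U(m,m')=\min\{1-A(m,m'),B(m,m')-1\}$, which sandwich $\TV(m,m')$ by the derivation preceding the theorem; this is $O(n)$ arithmetic. Summing the three phases, the bottleneck is the envelope construction and breakpoint merge, giving the claimed $O(n\log n)$ total time. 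I would close by noting that nothing here is special to the Gaussian family: the same argument applies to any exponential family whose sufficient statistic $t(x)$ is a polynomial of bounded degree $d$, with the order $2$ replaced by $d$ in the Davenport--Schinzel bound, still yielding $O(n\log n)$.
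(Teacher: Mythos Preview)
Your proposal is correct and matches the paper's approach: the paper states Theorem~2 without an explicit proof, relying on the preceding CELB/CEUB construction and the references~\cite{NS-2016,NS-2017}, so you are supplying precisely the missing algorithmic justification --- pairwise quadratic log-differences give at most two crossings, hence order-$2$ Davenport--Schinzel envelopes of linear size computable in $O(n\log n)$, followed by an $O(n)$ breakpoint merge and $O(1)$ CDF work per elementary interval.

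One small slip worth fixing: in the density-ratio refinement, the quantities $L_s=\sum_{i=1}^k w_iA_s^i-\sum_{i=1}^{k'}w_i'(B_s^i)'$ and $U_s$ are sums over \emph{all} $n=k+k'$ components, so computing $\mu_s,\Omega_s$ is $O(n)$ per interval, not $O(1)$; over $O(n)$ intervals this is $O(n^2)$ as written. This does not jeopardize the theorem --- the basic envelope bounds $A(m,m')$ and $B(m,m')$ already give the claimed $O(n\log n)$ deterministic approximation --- but your parenthetical should either be dropped or flagged as a more expensive optional refinement.
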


The following section describes experimental results that highlight the tightness performance of these bounds.

%%%%%%%%%
\section{Experiments}\label{sec:exp}
%%%%%%%%%

We assess the proposed TV bounds based on the following univariate GMM models 
$\mathtt{GMM}_1$, $\mathtt{GMM}_2$, $\mathtt{GMM}_3$ and $\mathtt{GMM}_4$,
which was used in \cite{NS-2017}.
We split each elementary interval into $10$ pieces of equal size so as to
improve the bound quality.
For MC (Monte Carlo) and CGQLB, we sample from both $p$ and $q$ and combine these sample sets.
Fig.~(\ref{fig:gmm}) shows the envelopes of these GMMs and the corresponding
TV. In the rightmost figure, the $x$-axis is the sample size
for MC and CGQLB, and the $y$-axis is the TV value.
The 95\% confidence interval is visualized for MC.
We can see that the proposed combinatorial bounds are quite tight
and enclose the true TV value.
Notably, the CGQLB is even tighter if the sample size is large enough.
Given the same sample size, the number of density evaluations
(computing $p(x)$ and $q(x)$ for one time) are the same for MC and CGQLB.
Therefore, instead of doing MC, one should prefer to use CGQLB
that provides a deterministic bound. The experiments are further carried on Gamma and Rayleigh mixtures, see Fig.~(\ref{fig:gmm}).

In a second set of experiments, we generate random GMMs where the means are taken from the standard
Gaussian distribution (dataset 1) or with its standard deviation increased to 5 (dataset 2).
In both cases, the precision is sampled from $\mathtt{Gamma}(5,0.2)$.
All components have equal weights.
Fig.~(\ref{fig:randgmm}) shows mean$\pm$standard deviation 
for CELB/CEUB/CGQLB against then the number of mixture components $k$.
TV (relative) shows the relative value of the bounds, which is the ratio
between the bound and the ``true'' TV estimated using $10^4$ MC samples.
CGQLB is implemented with 100 random samples drawn from a mixture of $p$ and $q$ with equal weights.
The Pinsker upper bound is based on a ``true'' KL estimated by $10^4$ MC samples.
(strictly speaking, the Pinsker bound estimated in this way is not a deterministic bound as our proposed bounds.)
We perform 100 independent runs for each $k$.
TV decreases as $k$ increases because $p$ and $q$ are more mixed.
We see that CELB and CEUB provide relatively tight bounds as compared to the Pinsker bound,
which are well enclosed by the trivial bounds $[0,1]$.
The quality of CGQLB is remarkably impressive: based on the yellow lines in
the right figures, using merely 100 random samples we can get a upper bound
which is very close to the true value of the TV.

\def\ttt{0.75}
\begin{figure*}[t]
\centering
\includegraphics[width=\ttt\textwidth]{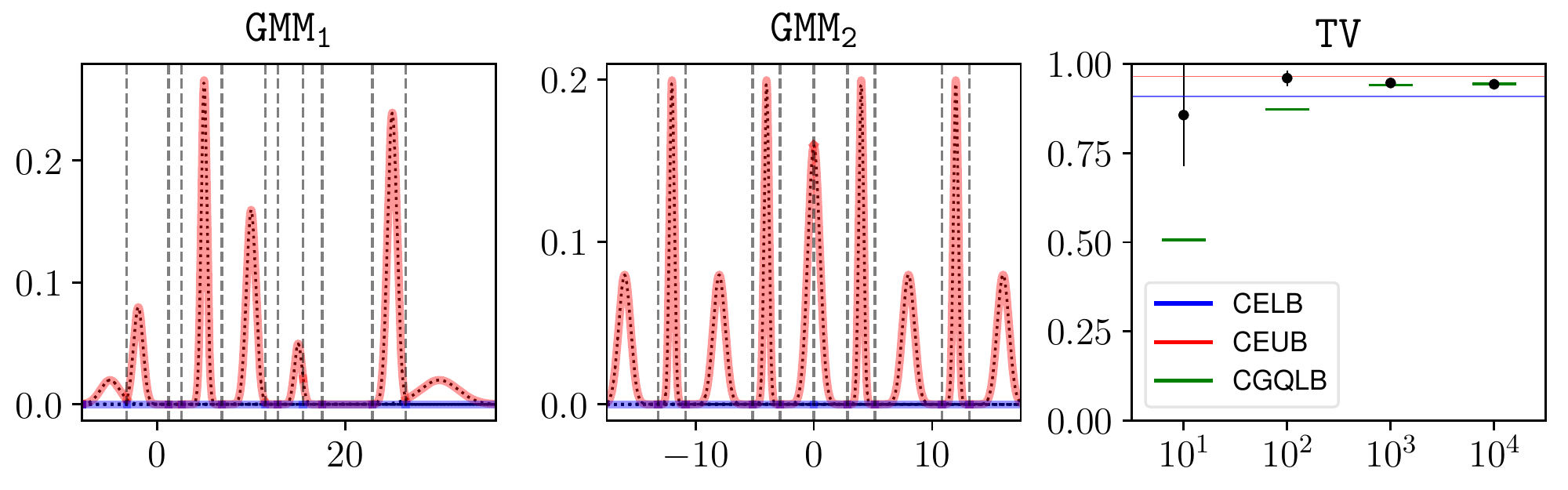}\\
\includegraphics[width=\ttt\textwidth]{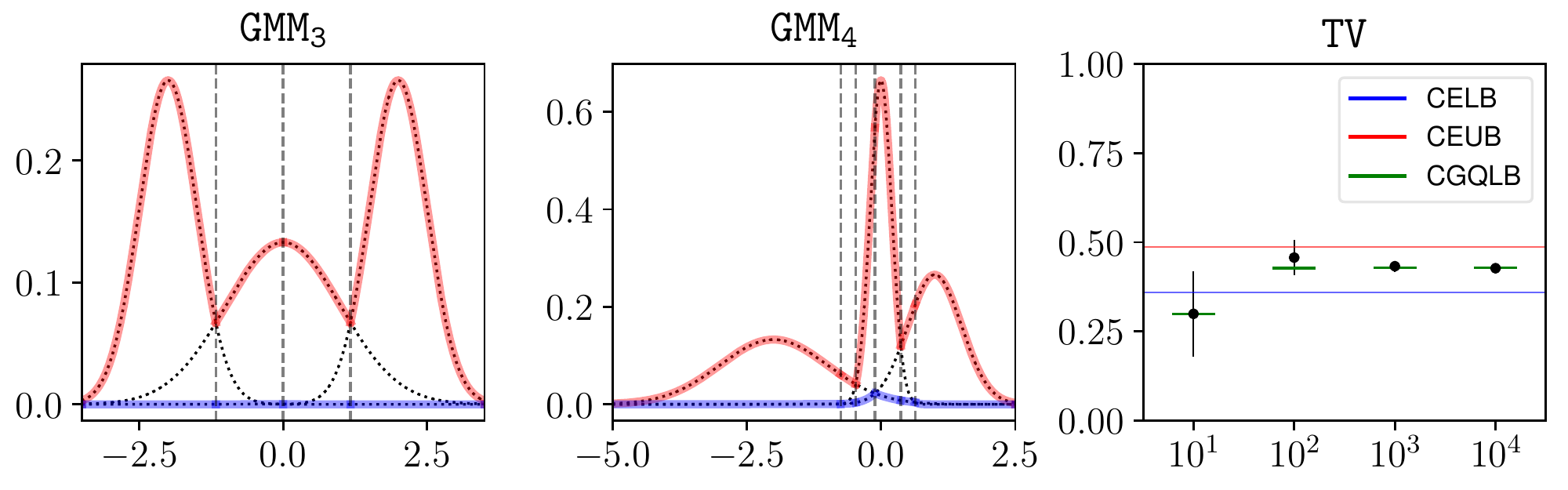}\\
\includegraphics[width=\ttt\textwidth]{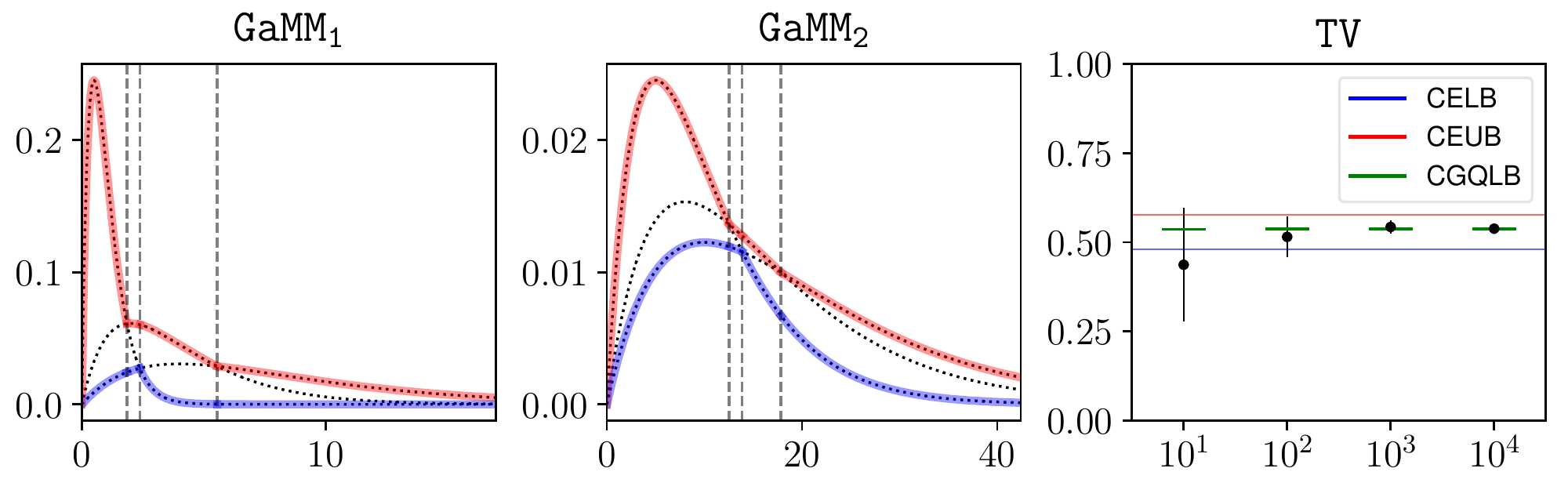}\\ 
\includegraphics[width=\ttt\textwidth]{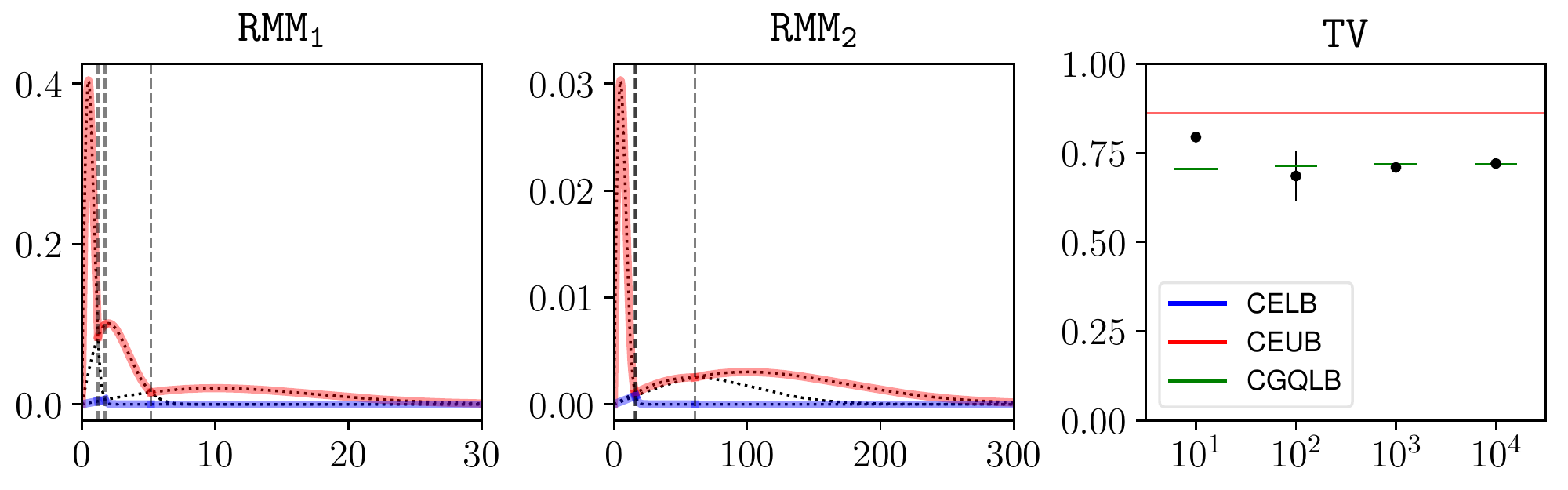}

\caption{Mixture models (from top to bottom: Gaussian, Gamma and Rayleigh) and their upper (red) and lower (blue) envelopes.
The rightmost figure shows their TV computed by 
\ding{192} MC estimation (black error bars);
\ding{193} the proposed guaranteed combinatorial bounds (blue and red lines);
\ding{194} the coarse-grained quantized lower bound (green line).}
\label{fig:gmm}
\end{figure*}

\begin{figure}[h]
\centering
\begin{subfigure}[t]{\textwidth}
\includegraphics[width=\textwidth]{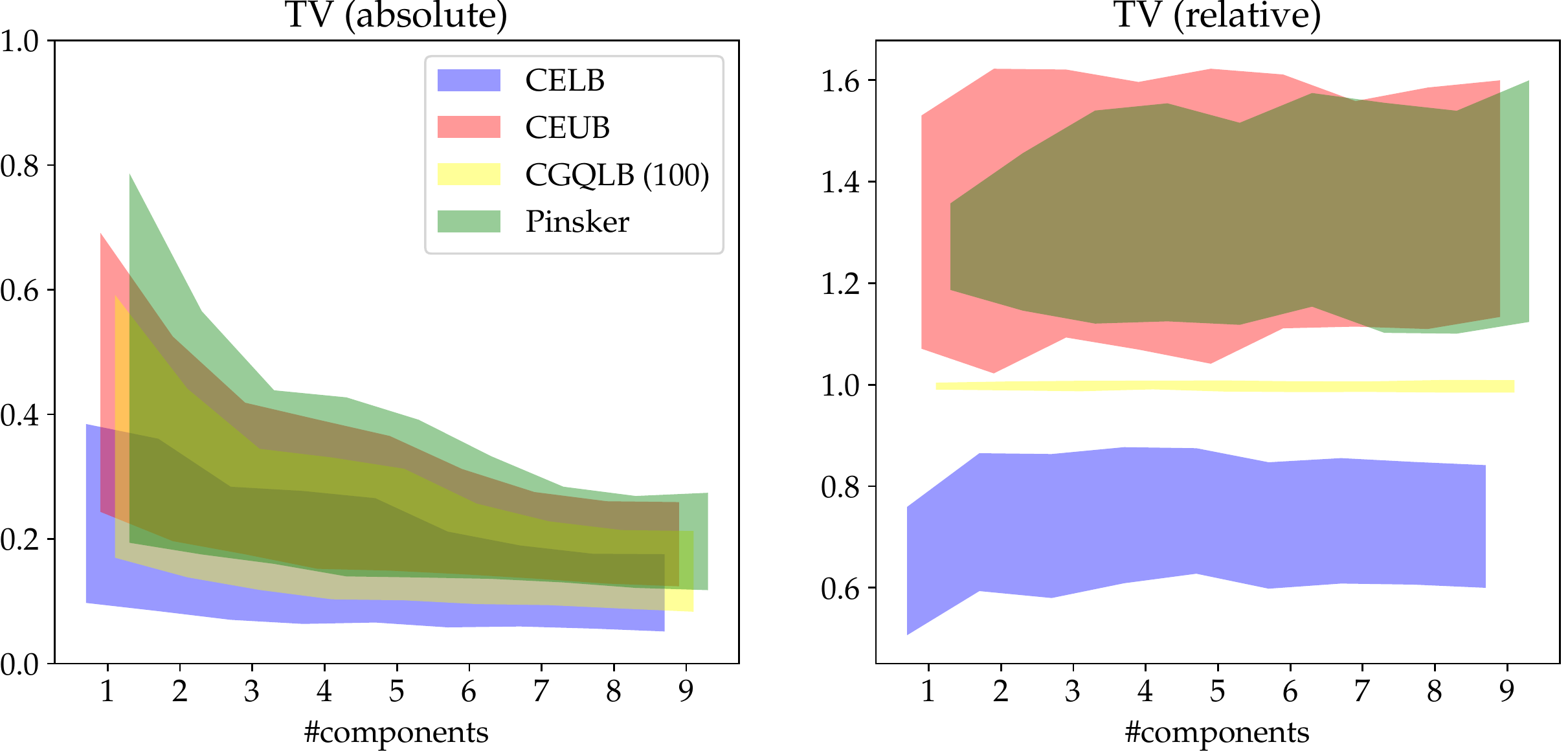}
\caption{Random Dataset 1}
\end{subfigure}
\begin{subfigure}[t]{\textwidth}
\includegraphics[width=\textwidth]{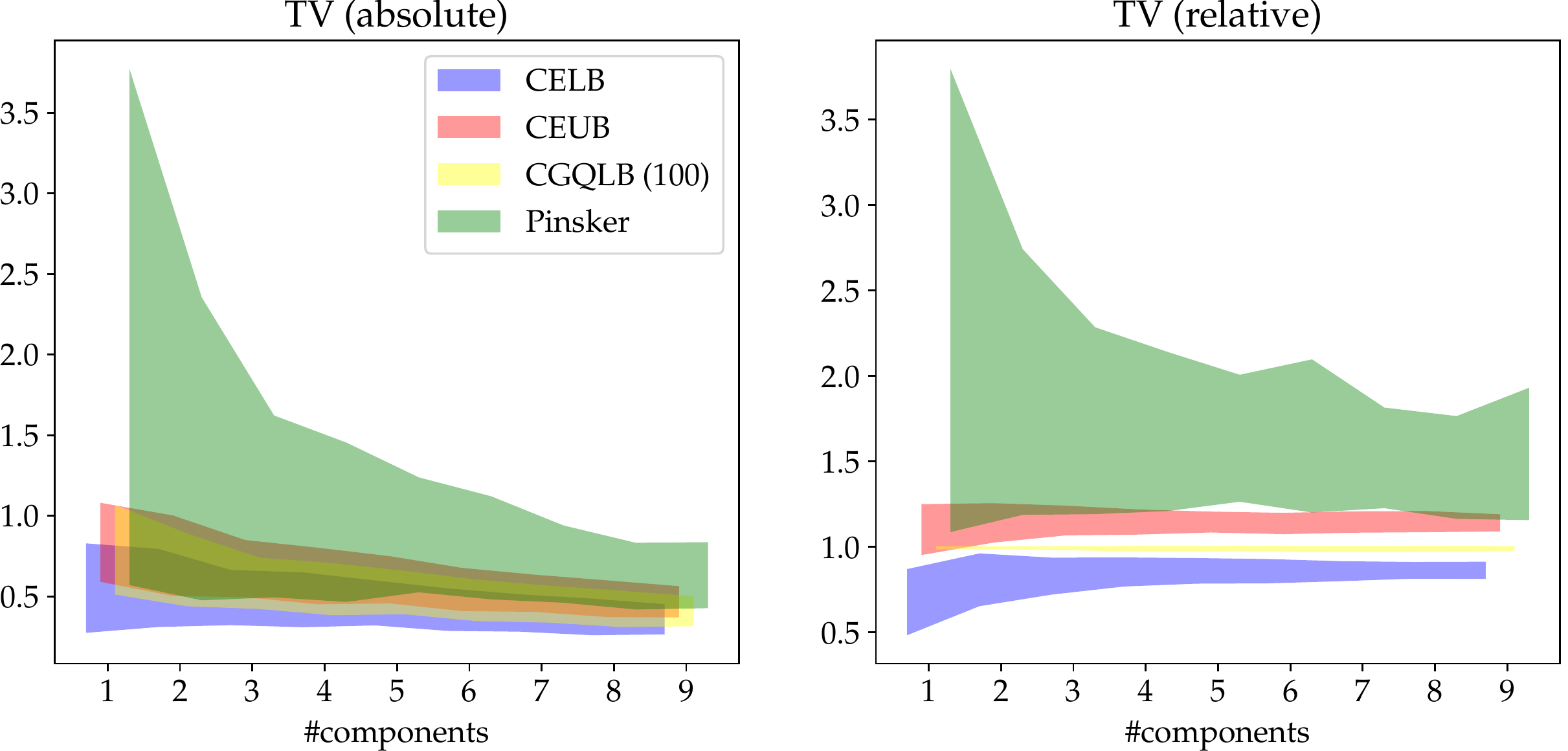}
\caption{Random Dataset 2}
\end{subfigure}
\caption{The bounds against the number of components
on random GMMs with equal component weights. 100 different pairs of GMMs are generated for each
configuration. The mean$\pm$std. is presented as a colored band for each method.
TV (absolute) shows the absolute values of the bounds. TV (relative) shows the
value of the bounds divided by the ``true'' TV estimated by MC sampling ($10^4$ samples).}\label{fig:randgmm}
\end{figure}

%\begin{figure*}[t]
%\includegraphics[width=\textwidth]{tv_gamm1_gamm2.pdf}\\ 
%\includegraphics[width=\textwidth]{tv_rmm1_rmm2.pdf}
 %
%\caption{(Top) Two Gamma MM models and their upper (red) and lower (blue) envelopes.
%The rightmost figure shows their TV computed by 
%\ding{192} MC estimation (black error bars);
%\ding{193} the proposed guaranteed combinatorial bounds (blue and red lines);
%\ding{194} the coarse-grained quantized lower bound (green line).}\label{fig:gammamm}
%(Bottom) Two Rayleigh mixtures
%\end{figure*}

%%%%%%%%%%
\section{Conclusion and discussion}\label{sec:concl}
%%%%%%%%%%

We described novel deterministic lower and upper bounds on the total variation distance between univariate mixtures, and demonstrated their effectiveness for Gaussian, Gamma and Rayleigh mixtures. 
This task is all the more challenging since the TV value is falling in the range $[0,1]$, and that the designed bounds should improve over these naive bounds.
A first proposed approach relies on the information monotonicity~\cite{IG-2016} of the TV to design a lower bound (or a series of nested lower bounds), and can be extended to arbitrary $f$-divergences.
A second set of techniques uses tools of computational geometry to compute mixture component upper and lower geometric envelopes of their weighted component univariate distributions, and retrieve from these decompositions both  Combinatorial Envelope  Lower and Upper
Bounds (CELB/CEUB).
All those methods certify {\em deterministic} bounds, and are therefore recommended over the traditional Monte Carlo stochastic approximations that has no deterministic guarantee (although being consistent asymptotically).

Finally, let us discuss the role of {\em generalized TV distances} in $f$-divergences:
$f$-divergences are statistical separable divergences which admit the following {\em integral-based representation}~\cite{Liese-2006,GenPinsker-2009,InfoDivRisk-2011,Sason-2016}:
\begin{eqnarray*}
I_f^*(p:q) &=& \int_\calX q(x) f\left(\frac{p(x)}{q(x)}\right)\dmu(x),\\
I_f^*(p:q) &=& \int_0^1 {\frac{1}{u^3}f''(\frac{1-u}{u})}  \TV_u(p:q)\du,\\
\TV_u(p:q) &\eqdef & I_{f_u}^*(p:q),\\
f_u(t) &\eqdef & \min\{u,1-u\}-\min\{1-u,ut\}.
\end{eqnarray*}
Here, we have $I_f^*(p:q)=I_f(q:p)=I_{f^\diamond}(p:q)$ for $f^\diamond(u)=uf(1/u)$, see~\cite{Csiszar-2004}.
$\TV_u$ are generalized (bounded) total variational distances, and our deterministic bounds can be extended to these $\TV_u$'s. 
However, note that $I_f$ may be infinite (unbounded) when the integral diverges.

Code for reproducible research is available at \url{https://franknielsen.github.io/BoundsTV/index.html} 

\vspace{2em}
 
\bibliographystyle{IEEEbib}
\bibliography{LowerUpperBoundBIB}

\end{document}